\DeclareMathAlphabet{\pazocal}{OMS}{zplm}{m}{n}
\def\rn{\mathbb{R}}
\def\cn{\mathbb{C}}
\def\l{\left}
\def\r{\right}
\def\sF{\pazocal{F}}
\def\sG{\pazocal{G}}
\def\sM{\pazocal{M}}
\def\sX{\pazocal{X}}
\def\sD{\pazocal{D}}
\def\sV{\pazocal{V}}
\def\sP{\mathscr{P}}
\def\sQ{\mathscr{Q}}
\def\bX{\mathbf{X}}
\def\hs{\mathscr{HS}}
\def\pr{\mathbb{P}}
\def\d2{\sD_2}
\def\dd{\Delta\left( \sD \right)}
\def\span{\operatorname{span}}
\def\simiid{\overset{iid}{\sim}}
\title[Identifiability of Latent Variables from Sample Groups]{On The Identifiability of Mixture Models from Grouped Samples}
\begin{document}

\maketitle

\begin{abstract}
Finite mixture models are statistical models which appear in many problems in statistics and machine learning. In such models it is assumed that data are drawn from random probability measures, called mixture components, which are themselves drawn from a probability measure $\sP$  over probability measures. When estimating mixture models, it is common to make assumptions on the mixture components, such as parametric assumptions. In this paper, we make no assumption on the mixture components, and instead assume that observations from the mixture model are grouped, such that observations in the same group are known to be drawn from the same component. We show that any mixture of $m$ probability measures can be uniquely identified provided there are $2m-1$ observations per group. Moreover we show that, for any $m$, there exists a mixture of $m$ probability measures that cannot be uniquely identified when groups have $2m-2$ observations. Our results hold for any sample space with more than one element.
\end{abstract}

\begin{keywords}
Mixture Model, Latent Variable Model, Identifiability, Hilbert Space, Tensor Product
\end{keywords}

\section{Introduction}

A finite mixture model is a probability law based on a finite number of probability measures, $\mu_1, \ldots, \mu_m$, and a discrete distribution $w_1, \ldots, w_m$. A realization of a mixture model is first generated by first generating a component at random $k$, $1 \le k \le m$, and then drawing from $\mu_k$. A mixture model can be associated with a probability measure on probability measures, which we denote $\sP$. Mixture models are used to model data throughout statistics and machine learning.

A primary theoretical question concerning mixture models is identifiability. A mixture model is said to be identifiable if no other mixture model (of equal or lesser complexity) explains the distribution of the data. Some previous work on identifiability considers the situation where the observations are drawn iid from the mixture model, and conditions on $\mu_1, \ldots, \mu_m$ are imposed, such as Gaussianity \citep{dasgupta07,anderson14}. In this work we make no assumptions on $\mu_1,\ldots,\mu_m$. Instead, we assume the observations are grouped, such that realizations from the same group are known to be iid from the same component. We call these groups of samples ``random groups.'' We define a random group to be a random collection $\bX_i$, where $\bX_i =X_{i,1},\dots,X_{i,n} \simiid \mu_i$ and $\mu_i \simiid \sP$.

Consider the set of all mixtures of probability measures which yield the same distribution over the random groups as does $\sP$. If some element of this set other than $\sP$ has no more components than $\sP$ then $\sP$ is not identifiable. In other words, there is no way to differentiate $\sP$ from another model of equal or lesser complexity. Fortunately, with a sufficient number of samples in each random group, $\sP$ becomes the most simple model which describes the data. In this paper we show that, for any sample space, any mixture of probability measures with $m$ components is identifiable when there are $2m-1$ samples per random group. Furthermore we show that this bound cannot be improved, regardless of sample space.

\subsection{Applications of Probability Measures over Probability Measures}

Though a somewhat mathematically abstract object, probability measures over spaces of probability measures arise quite naturally in many statistical problems. Any application which use mixture models, for example clustering, is utilizing a probability measure over probability measures. Moreover mixture models are a subset of a larger class of models known as latent variable models. One problem in latent variable models which has seen significant interest recently is topic modeling. Topic modelling is concerned with the extraction of some sort of topical structure from a collection of documents. Many popular methods for topic modelling assume that each document in question has a latent variable representing a ``topic'' or a random convex combination of topics which determines the distribution of words in that document \citep{LDA,anandkumar14,arora12}.

Another statistical problem which often utilizes a probability measure over probability measures is transfer learning. In transfer learning one is interested in utilizing several different but related training datasets (perhaps a collection of datasets which correspond to different patients in a study) to construct some sort of classifier or regressor for another different but related testing dataset. There are many approaches to this problem but one formulation assumes that each dataset is generated from a random probability measure and each random measure is generated from a fixed probability measure over probability measures \citep{blanchard11,maurer13}.

Finally sometimes we would like to perform statistical techniques directly on a space of probability measures. Examples of this include detection of anomalous distributions \citep{muandet13} and distribution regression \citep{poczos13,szabo14}. 

\subsection{How Does Group Size Affect Consistency?}
Many of the applications above assume a model similar to the one we described in the first paragraph. They assume there exists some probability measure, $\sP$, over a space of probability measures  from which we have observed groups of data $\bX_1,\ldots, \bX_N$ with $\bX_i =  X_{i,1},\ldots,X_{i,M_i} \simiid p_i $ and $p_i \sim \sP$. For example in topic modeling each $\bX_i$ is a document which contains $M_i$ words and in transfer learning $\bX_i$ is one of the several different training datasets. Proposed algorithms for solving these problems often contain some sort of consistency result and these results typically require that $N\to\infty$ and either $M_i \to \infty$ for all $i$ or that $\sP$ satisfies some properties which makes $M_i \to \infty$ unnecessary. When considering such results one may wonder what sort of statistical penalty we incur from fixing $M_i=C$  for all $i$.

While this question is clearly interesting from a theoretical perspective it has a couple of important practical implications. Firstly it is not uncommon for $C$ to be restricted in practice. An example of this is topic modelling of Twitter documents, where the restricted character count keeps each $M_i$ quite small. The second important practical consideration is that some latent variable techniques do not utilize the full sample $\bX_i$ and instead break down $\bX_i$ into many pairs or triplets of samples for analysis \citep{anandkumar14,arora12}. It is important to know what, if anything, is lost from doing this. Though we do not provide a direct answer to this question, our results seem to suggest that such techniques may significantly limit what can be known about $\sP$.

\section{Related Work}
The question of how many samples are necessary in each random group to uniquely identify a finite mixture of measures has come up sporadically over the past couple of decades. The application of Kruskal's theorem \citep{kruskal77} has been used to concoct various identifiability results for random groups containing three samples. In \cite{allman09} it was shown that any mixture of linearly independent measures over a discrete space or linearly independent probability distributions on $\rn^d$ are identifiable from random groups containing three samples. In \cite{hett00} it was shown that a mixture of $m$ probability measures on $\rn$ is identifiable from random groups of size $2m-1$ provided there exists some point in $\rn$ where the cdf of each mixture component at that point is distinct. The result most closely resembling our own is in \cite{rabani13}. In that paper they show that a mixture of $m$ probability measures over a discrete domain is identifiable with $2m-1$ samples in each random group. They also show that this bound is tight and provide a consistent algorithm for estimating arbitrary mixtures of measures over a discrete domain.

Our proofs are quite different from other related identifiability results and rely on tools from functional analysis. Other results in the same vein as ours rely on algebraic or spectral theoretic tools. Our proofs basically rely on two proof techniques. The first technique is the embedding of finite collections of measures in some Hilbert space. The second technique is using the properties of symmetric tensors over $\rn^d$ and applying them to tensor products of Hilbert spaces. Our proofs are not totally detached from the algebraic techniques but the algebraic portions are hidden away in previous results about symmetric tensors.
\section{Problem Setup}
We will be treating this problem in as general of a setting as possible. For any measurable space we define $\delta_x$ as the Dirac measure at $x$. For $\smiley$ a set, $\sigma$-algebra, or measure, we denote $\smiley^{\times a}$ to be the standard $a$-fold product associated with that object. For any natural number $k$ we define $\left[ k \right] \triangleq \mathbb{N} \bigcap \left[ 1,k \right]$.
Let $\Omega$ be a set containing more than one element. This set is the sample space of our data. Let $\sF$ be a $\sigma$-algebra over $\Omega$. Assume $\sF \neq \left\{ \emptyset, \Omega \right\}$. We denote the space of probability measures over this space as $\sD\left( \Omega,\sF \right)$, which we will shorten to $\sD$. We will equip $\sD$ with the $\sigma$-algebra $2^\sD$ so that each Dirac measure over $\sD$ is unique. Define $\dd \triangleq \span\left( \delta_x: x \in \sD \right)$. This will be the ambient space where our mixtures of probability measures live. Let $\sP = \sum_{i=1}^m \delta_{\mu_i} w_i $ be a probability measure in $\dd$. Let $\mu\sim \sP$ and $X_1 ,\ldots, X_n \simiid \mu$. We will denote $\bX = \left( X_1,\ldots,X_n \right)$.

We will now derive the probability law of $\bX$. Let $A\in \Omega^{\times n}$, we have
\begin{eqnarray*}
	\pr\left(\bX \in A \right)
	&=& \sum_{i=1}^m \pr\left( \bX \in A \right|\mu=\mu_i) \pr\left( \mu=\mu_i \right)\\
	&=&  \sum_{i=1}^m w_i \mu_i^{\times n}\left( A \right).
\end{eqnarray*}
The second equality follows from Lemma 3.10 in \cite{fomp}.
So the probability law of $\bX$ is 
\begin{eqnarray}
	\label{xdens}
	\sum_{i=1}^m w_i \mu_i^{\times n}. 
\end{eqnarray}
We want to view the probability law of $\bX$ as a function of $\sP$ in a mathematically rigorous way, which requires a bit of technical buildup.
Let $\sV$ be a vector space. We will now construct a version of the integral for $\sV$-valued functions over $\sD$. Let $\sQ\in \dd$. From the definition of $\dd$ it follows that $\sQ$ admits the representation $$\sQ = \sum_{i=1}^r \delta_{\mu_i} \alpha_i.$$
From the well-ordering principle there must exist some representation with minimal $r$ and we define $r$ as the {\it order} of $\sQ$. We can show that the representation of any $\sQ \in \dd$ is unique up to permutation of its indices.
\begin{definition} \label{def:mixmeasure}
	We call $\sP$ a {\em mixture of measures} if it is a probability measure in $\dd$. We will say that $\sP$ has $m$ {\em mixture components} if it has order $m$.
\end{definition}

\begin{lemma} \label{lem:represent}
	Let $\sQ\in \dd$ and admit minimal representations $\sQ = \sum_{i=1}^r \delta_{\mu_i} \alpha_i = \sum_{i=1}^r\delta_{\mu_i'} \alpha_i'$. There exists some permutation $\psi:\left[ r \right] \to \left[ r \right]$ such that $\mu_{\psi\left( i \right)} = \mu'_i$ and $\alpha_{\psi\left( i \right)} = \alpha'_i$ for all $i$.
\end{lemma}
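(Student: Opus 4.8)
The plan is to show that the family of Dirac measures $\{\delta_\mu : \mu \in \sD\}$ is linearly independent as a subset of $\dd$, so that $\dd$ is in effect the free vector space on the set $\sD$; the lemma is then just the uniqueness of coordinates with respect to a basis. First I would record the easy reduction: in any \emph{minimal} representation $\sQ = \sum_{i=1}^r \delta_{\mu_i}\alpha_i$ the measures $\mu_1,\dots,\mu_r$ must be pairwise distinct and every $\alpha_i$ must be nonzero, since otherwise one could either merge two equal Dirac terms (if the merged coefficient is nonzero) or delete a term with vanishing coefficient, in each case producing a strictly shorter representation and contradicting minimality of $r$.

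The crucial step is the linear independence claim. Suppose $\sum_{j=1}^k c_j \delta_{\nu_j} = 0$ in $\dd$ with $\nu_1,\dots,\nu_k \in \sD$ pairwise distinct. Here I would use that $\dd$ is a space of (signed) measures on $(\sD, 2^\sD)$, so every singleton $\{\nu_\ell\}$ is measurable; evaluating $\sum_j c_j \delta_{\nu_j}$ on $\{\nu_\ell\}$ gives $c_\ell = 0$. Hence $c_\ell = 0$ for all $\ell$, which is exactly linear independence of the Dirac measures. This is the only place where the choice of $\sigma$-algebra $2^\sD$ on $\sD$ is used — it is precisely what guarantees that distinct mixture components give distinct, and here linearly independent, Dirac measures — and it is the only real content of the argument.

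Finally I would combine the two minimal representations. From $\sum_{i=1}^r \delta_{\mu_i}\alpha_i = \sum_{i=1}^r \delta_{\mu'_i}\alpha'_i$ we get $\sum_{i} \delta_{\mu_i}\alpha_i - \sum_i \delta_{\mu'_i}\alpha'_i = 0$, and after collecting like Dirac terms, the net coefficient of $\delta_\nu$ vanishes for each measure $\nu$ occurring among the $\mu_i$ or the $\mu'_i$, by the independence just established. If $\nu = \mu_i$ for some (necessarily unique) $i$ but $\nu \notin \{\mu'_1,\dots,\mu'_r\}$, then that net coefficient equals $\alpha_i \neq 0$, a contradiction; the symmetric statement rules out the reverse. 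Hence $\{\mu_1,\dots,\mu_r\} = \{\mu'_1,\dots,\mu'_r\}$ as sets, and since the measures within each representation are distinct, there is a unique bijection $\psi:[r]\to[r]$ with $\mu_{\psi(i)} = \mu'_i$; comparing the coefficient of $\delta_{\mu'_i}$ on both sides then forces $\alpha_{\psi(i)} = \alpha'_i$. The main obstacle is conceptual rather than computational: recognizing that the entire statement rests on the separation of Dirac measures provided by the $\sigma$-algebra $2^\sD$, after which the argument is a routine basis-uniqueness computation (and, as a byproduct, it also shows that the order $r$ of $\sQ$ is well defined, independent of the chosen minimal representation).
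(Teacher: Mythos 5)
Your proof is correct and rests on exactly the same mechanism as the paper's: evaluating measures on singletons $\left\{ \mu \right\}$, which the choice of the $\sigma$-algebra $2^\sD$ makes measurable, the paper doing this directly via $\sQ\left( \left\{ \mu_i' \right\} \right)$ while you package the same evaluation as linear independence of the Dirac measures followed by a basis-uniqueness argument. This is only a repackaging, not a genuinely different route, so no further comparison is needed.
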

\begin{proof}
	Because both representations are minimal it follows that $\alpha'_i \neq 0$ for all $i$ and $\mu_i' \neq \mu_j'$ for all $i \neq j$. From this we know $\sQ\left( \l\{\mu_i'\r\} \right) \neq 0$ for all $i$. Because $\sQ\left( \l\{\mu_i'\r\} \right) \neq 0$ for all $i$ it follows that for any $i$ there exists some $j$ such that $\mu_i' = \mu_j$. Let $\psi: \left[ r \right] \to \left[ r \right]$ be a function satisfying $\mu_i' = \mu_{\psi\left( i \right)}$. Because the elements $\mu_1,\ldots,\mu_r$ are also distinct $\psi$ must be injective and thus a permutation. Again from this distinctness we get that, for all $i$, $\sQ\left( \left\{ \mu_i' \right\}  \right)= \alpha'_i =\alpha_{\psi\left( i \right)}$ and we are done.
\end{proof}
Henceforth when we define an element of $\dd$ with a summation we will assume that the summation is a minimal representation. Any minimal representation of a mixture of measures $\sP$ with $m$ components satisfies $\sP=\sum_{i=1}^m w_i \delta_{\mu_i}$ with $w_i>0$ for all $i$ and $\sum_{i=1}^m w_i = 1$. So any mixture of measures is a convex combination of Dirac measures at elements in $\sD$.

 For a function $f:\sD \to \sV$ define
\begin{eqnarray*}
	\int f(\mu) d\sQ(\mu) = \sum_{i=1}^r \alpha_i f\left( \mu_i \right),
\end{eqnarray*}
where $\sum_{i=1}^r \delta_{\mu_i} \alpha_i$ is a minimal representation of $\sQ$. This integral is well defined as a consequence of Lemma \ref{lem:represent}.

For a $\sigma$-algebra $\left(Q, \Sigma \right)$ we define $\sM \left(Q, \Sigma \right)$ as the space of all finite signed measures over that space. Let $\lambda_n:\sM\left( \Omega, \sF \right) \to \sM\left( \Omega^{\times n}, \sF^{\times n}\right);\mu \mapsto \mu^{\times n}$. We introduce the operator $V_n:\dd\to \sM\left( \Omega^{\times n}, \sF^{\times n} \right)$
\begin{eqnarray*}
	V_n(\sQ) = \int \lambda_n(\mu) d\sQ\left( \mu \right)= \int \mu^{\times n} d\sQ\left( \mu \right).
\end{eqnarray*}
For a minimal representation $\sQ =\sum_{i=1}^r \delta_{\mu_i} \alpha_i$, we have
\begin{eqnarray*}
	V_n(\sQ) =\sum_{i=1}^r \mu_i^{\times n} \alpha_i.
\end{eqnarray*}

From this definition we have that $V_n\left( \sP \right)$ is simply the law of $\bX$ which we derived earlier.
Two mixtures of measures are different if they admit a different measure over $\sD$.
\begin{definition}\label{def:ident}
	We call a mixture of measures, $\sP$, \emph{$n$-identifiable} if there does not exist a different mixture of measures $\sP'$, with order no greater than the order of $\sP$, such that $V_n\left( \sP \right) = V_n\left( \sP' \right)$.
\end{definition}

Definition \ref{def:ident} is the central object of interest in this paper. Given a mixture of measures, $\sP = \sum_{i=1}^m w_i\delta_{\mu_i}$ then $V_n(\sP)$ is equal to $\sum_{i=1}^m w_i \mu_i^{\times n}$, the measure from which $\bX$ is drawn. In topic modelling $\bX$ would be the samples from a single document and in transfer learning it would  be one of the several collections of training samples. If $\sP$ is not $n$-identifiable then we know that there exists a mixture of measures which is no more complex (in terms of number of mixture components) than $\sP$ which is not discernible from $\sP$ given the data. Practically speaking this means we need more samples in each random group $\bX$ in order for the full richness of $\sP$ to be manifested in $\bX$.
\section{Results}
Our primary result gives us a bound on the $n$-identifiability of all mixtures of measures with $m$ or fewer components. We also show that this bound is tight.
\begin{theorem} \label{thm:ident}
	Let $\left( \Omega,\sF \right)$ be a measurable space. Mixtures of measures with $m$ components are $(2m-1)$-identifiable.
\end{theorem}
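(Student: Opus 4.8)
The plan is to reduce the theorem to a single linear‑algebra fact about tensor powers of pairwise non‑proportional vectors, following the two techniques flagged in the introduction: embed the finitely many measures at hand into a finite‑dimensional space, then quote a property of symmetric tensors over $\rn^d$. Suppose for contradiction that some mixture of measures $\sP=\sum_{i=1}^m w_i\delta_{\mu_i}$ is not $(2m-1)$‑identifiable, witnessed by a different mixture $\sP'=\sum_{j=1}^{m'}w'_j\delta_{\mu'_j}$ with $m'\le m$ and $V_{2m-1}(\sP)=V_{2m-1}(\sP')$. Let $\nu_1,\dots,\nu_k$ enumerate the \emph{distinct} elements of $\{\mu_1,\dots,\mu_m\}\cup\{\mu'_1,\dots,\mu'_{m'}\}$ in $\sD$, so that $k\le m+m'\le 2m$, set $n:=2m-1$, and let $c_\ell:=\sP(\{\nu_\ell\})-\sP'(\{\nu_\ell\})$. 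Since every atom of $\sP$ and of $\sP'$ lies among the $\nu_\ell$, the identity $V_n(\sP)=V_n(\sP')$ is exactly $\sum_{\ell=1}^k c_\ell\,\nu_\ell^{\times n}=0$ in $\sM\l(\Omega^{\times n},\sF^{\times n}\r)$, while $\sP\ne\sP'$ (together with Lemma~\ref{lem:represent}) forces $c_\ell\ne 0$ for some $\ell$. It thus suffices to prove that $\nu_1^{\times n},\dots,\nu_k^{\times n}$ are linearly independent.

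Next I would pass to a finite‑dimensional picture. Let $W:=\span(\nu_1,\dots,\nu_k)\subseteq\sM(\Omega,\sF)$, a real vector space of dimension $d\le k$, which we identify with $\rn^d$. The product‑of‑measures operation $(\eta_1,\dots,\eta_n)\mapsto \eta_1\times\cdots\times\eta_n$ is multilinear on $W^{n}$, hence induces a linear map $T\colon W^{\otimes n}\to \sM\l(\Omega^{\times n},\sF^{\times n}\r)$ with $T(\eta_1\otimes\cdots\otimes\eta_n)=\eta_1\times\cdots\times\eta_n$. This $T$ is injective: since the evaluation functionals $\eta\mapsto\eta(A)$, $A\in\sF$, separate $\sM(\Omega,\sF)$, their restrictions span $W^*$, so one can pick $A_1,\dots,A_d\in\sF$ making the matrix $\l(\nu_i(A_j)\r)_{i,j}$ invertible; the images under $T$ of the decomposable tensors built from a basis of $W$ are then separated by the functionals $\eta\mapsto\eta\l(A_{j_1}\times\cdots\times A_{j_n}\r)$, whose pairing matrix is the $n$‑fold Kronecker power of $\l(\nu_i(A_j)\r)$ and therefore invertible. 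Consequently $\sum_\ell c_\ell\,\nu_\ell^{\otimes n}=0$ in $W^{\otimes n}$, and it is enough to show the $\nu_\ell^{\otimes n}$ are linearly independent in $W^{\otimes n}$.

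Now the symmetric‑tensor step. The $\nu_\ell$ are distinct probability measures, so no one is a scalar multiple of another: as vectors in $W\cong\rn^d$ they are pairwise non‑proportional. The fact I would invoke is that pairwise non‑proportional vectors $z_1,\dots,z_k$ in $\rn^d$ have linearly independent powers $z_1^{\otimes n},\dots,z_k^{\otimes n}$ as soon as $n\ge k-1$ — a statement whose $d=2$ instance is a Vandermonde determinant, and whose general case follows by applying a generic linear map $W\to\rn^2$ that preserves pairwise non‑proportionality of the finitely many $\nu_\ell$ and then pairing $\sum_\ell c_\ell\,\nu_\ell^{\otimes n}=0$ against $\phi^{\otimes n}$ for varying functionals $\phi$ and reading off the coefficients of a binary form. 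Because $k\le 2m$, we have $n=2m-1\ge k-1$, so the $\nu_\ell^{\otimes n}$ are independent, whence every $c_\ell=0$. Unwinding, $c_\ell=0$ for all $\ell$ says $\sP$ and $\sP'$ put equal mass on each $\nu_\ell$, so $\sP=\sP'$ — contradiction — and the theorem follows.

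The main obstacle is the symmetric‑tensor fact used in the third paragraph: showing that $n\ge k-1$ already suffices for independence of the $n$‑th powers of pairwise non‑proportional vectors in \emph{arbitrary} dimension, together with the tight bookkeeping that makes $n=2m-1$ (and not more) exactly enough when $\sP$ and $\sP'$ share no components and $k=2m$. The reduction to the binary‑form Vandermonde case needs care in justifying that a generic projection keeps the $\nu_\ell$ pairwise non‑proportional and that no information is lost in restricting to decomposable test tensors. By contrast, the measure‑theoretic injectivity of $T$ and the final step converting $c_\ell=0$ into $\sP=\sP'$ are routine.
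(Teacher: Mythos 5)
Your proposal is correct in outline, but it reaches the theorem by a genuinely different route than the paper. Both arguments reduce the claim to the linear independence of the $(2m-1)$-st tensor powers of at most $2m$ nonzero, pairwise non-proportional vectors; the difference is where those vectors live and how the independence fact is established. The paper realizes the components as Radon--Nikodym derivatives with respect to the dominating measure $\xi=\sum_i\mu_i+\sum_j\nu_j$, checks they lie in $L^2(\xi)$, transports the identity of product measures into the Hilbert space $L^2(\xi)^{\otimes(2m-1)}$ via the unitary of Lemma~\ref{lem:l2prod} (together with Lemma~\ref{lem:radprod}), and then invokes its Hilbert-space independence result, Lemma~\ref{lem:linind}, proved by induction through the Hilbert--Schmidt identification of $H_1\otimes H_2$. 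You instead stay inside $\sM(\Omega,\sF)$: you span the distinct components by a finite-dimensional subspace $W$, show the product-measure map $W^{\otimes n}\to\sM\l(\Omega^{\times n},\sF^{\times n}\r)$ is injective by testing against rectangles (the Kronecker power of an invertible evaluation matrix), and then use the symmetric-tensor fact over $\rn^d$ --- which is precisely the finite-dimensional statement the paper attributes to \cite{symtensorrank} and re-proves in Hilbert space. Your route buys a purely algebraic, finite-dimensional argument that needs no $L^2$ integrability, no completion of tensor products, and no dominating-measure construction; moreover your device of taking signed weight differences $c_\ell$ over the union of components cleanly absorbs the paper's separate ``subtract common components and renormalize'' step. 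What the paper's route buys is a self-contained proof of the independence lemma: its Hilbert--Schmidt induction replaces your generic-projection-to-binary-forms argument, which is the one step you leave as a sketch. That sketch is sound, but to complete it you must verify that a generic linear map $W\to\rn^2$ keeps the finitely many components nonzero and pairwise non-proportional and that pairing the symmetric tensor against $\phi^{\otimes n}$ for all functionals $\phi$ suffices to reduce to the Vandermonde statement on binary forms --- or you could simply cite the $\rn^d$ lemma from \cite{symtensorrank}, as the paper does. You should also spell out the routine facts that products of finite signed measures are multilinear (so your map $T$ is well defined via the universal property) and that the evaluation functionals $\eta\mapsto\eta(A)$ span $W^*$, with the invertible matrix taken with respect to a basis of $W$ chosen from among the $\nu_\ell$.
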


\begin{theorem} \label{thm:noident}
	Let $\left( \Omega, \sF \right)$ be a measurable space with $\sF \neq \left\{ \emptyset,\Omega \right\}$. For all $m$, there exists a mixture of measures with $m$ components which is not $(2m-2)$-identifiable.
\end{theorem}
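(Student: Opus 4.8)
The plan is to reduce everything to a classical atomic-moment-matching fact by restricting attention to a two-point subsystem of $(\Omega,\sF)$. Since $\sF\neq\{\emptyset,\Omega\}$, fix $A\in\sF$ with $\emptyset\neq A\neq\Omega$ and pick $a\in A$, $b\in\Omega\setminus A$. For $p\in(0,1)$ let $\mu_p=p\delta_a+(1-p)\delta_b$, a probability measure on $(\Omega,\sF)$; these are pairwise distinct, and $\mu_p(A)=p$. The computation I would do first is the expansion $\mu_p^{\times n}=\sum_{S\subseteq[n]}p^{|S|}(1-p)^{n-|S|}\rho_S$, where $\rho_S$ is the product measure with $\delta_a$ in the coordinates in $S$ and $\delta_b$ in the coordinates of $[n]\setminus S$; the crucial point is that $\rho_S$ does not depend on $p$. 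Consequently, for a mixture $\sP=\sum_{i=1}^m w_i\delta_{\mu_{p_i}}$ one gets $V_n(\sP)=\sum_{S\subseteq[n]}\bigl(\int p^{|S|}(1-p)^{n-|S|}\,d\bar Q(p)\bigr)\rho_S$, where $\bar Q=\sum_i w_i\delta_{p_i}$ is the pushforward of $\sP$ under $\mu\mapsto\mu(A)$. Since $p\mapsto p^{|S|}(1-p)^{n-|S|}$ has degree at most $n$, this shows immediately that whenever two finitely supported probability measures $\bar Q,\bar Q'$ on $(0,1)$ agree on all moments of order $\le n$, the associated mixtures satisfy $V_n(\sP)=V_n(\sP')$.

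Next I would construct, for each $m$, two distinct probability measures, each supported on exactly $m$ points of $(0,1)$, whose moments of orders $0,1,\dots,2m-2$ coincide. Take distinct $z_1<\cdots<z_{2m}$ in $(0,1)$. The $(2m-1)\times(2m)$ matrix with entries $z_i^k$, $0\le k\le 2m-2$, has rank $2m-1$ because any $2m-1$ of its columns form an invertible Vandermonde matrix; hence its kernel is one-dimensional, a generator $(\lambda_1,\dots,\lambda_{2m})$ has all coordinates nonzero, and — testing the kernel relation against the degree-$(2m-2)$ polynomials $\prod_{j\notin\{i,i+1\}}(x-z_j)$ and comparing signs — the $\lambda_i$ must strictly alternate in sign. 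After possibly negating, I would let $\bar Q$ be the normalization of $\sum_{i\ \mathrm{odd}}\lambda_i\delta_{z_i}$ and $\bar Q'$ the normalization of $\sum_{i\ \mathrm{even}}(-\lambda_i)\delta_{z_i}$; the $k=0$ relation makes the two total masses equal, so these are probability measures, each with exactly $m$ atoms, with disjoint supports (hence distinct), and $\int x^k\,d\bar Q=\int x^k\,d\bar Q'$ for $0\le k\le 2m-2$.

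Finally I would assemble the counterexample: set $\sP=\sum \bar Q(\{z\})\,\delta_{\mu_z}$ and $\sP'=\sum \bar Q'(\{z\})\,\delta_{\mu_z}$. Each is a mixture of measures with exactly $m$ components, and $\sP\neq\sP'$ since their pushforwards under $\mu\mapsto\mu(A)$ are the distinct measures $\bar Q$ and $\bar Q'$. By the first paragraph, matching of moments up to order $2m-2$ yields $V_{2m-2}(\sP)=V_{2m-2}(\sP')$, so $\sP$ is a mixture of $m$ components that is not $(2m-2)$-identifiable. I expect the main obstacle to be the middle step: extracting from the Vandermonde kernel a clean alternating-sign statement and, from it, a splitting into two genuinely positive measures with exactly $m$ atoms each; the embedding and bookkeeping in the first and third paragraphs are routine once the expansion of $\mu_p^{\times n}$ is written down.
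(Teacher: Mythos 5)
Your proof is correct, but it reaches the counterexample by a genuinely different route than the paper. The example itself is the same in substance: your $\mu_z = z\delta_a + (1-z)\delta_b$ are exactly the paper's measures $\varepsilon_i \delta_f + (1-\varepsilon_i)\delta_{f'}$ lying on the segment between two Dirac measures. The paper, however, proves linear dependence of the $2m$ tensor powers $p_i^{\otimes 2m-2}$ by embedding the densities in a two-dimensional Hilbert space and invoking the dimension $2m-1$ of the space of symmetric order-$(2m-2)$ tensors over $\cn^2$; it then needs Lemma \ref{lem:linind} to see that all coefficients in the dependence are nonzero, and a separate orthogonality argument (pairing with $z \otimes z$ after splitting $2m-2 = (m-1)+(m-1)$) to force the positive/negative split to be exactly $m$ versus $m$. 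You instead expand $\mu_p^{\times(2m-2)}$ multinomially into the fixed measures $\rho_S$ with polynomial coefficients $p^{|S|}(1-p)^{2m-2-|S|}$, reducing the whole problem to matching moments of order $\le 2m-2$ of two $m$-atom measures on $(0,1)$, and you settle that with a Vandermonde-kernel argument; the alternating-sign property of the kernel vector (which you justify correctly by testing against $\prod_{j\notin\{i,i+1\}}(x-z_j)$) delivers, in one stroke, both the nonvanishing of every coefficient and the exact $m$/$m$ split that the paper has to work for separately. Your route is more elementary and self-contained, essentially reproving the tight binomial-mixture identifiability bound of \cite{blischke64} that the paper only gestures at in its discussion section; what the paper's heavier tensor-product formulation buys is that the same Hilbert-space machinery is what drives the positive result, Theorem \ref{thm:ident}, so the counterexample sits inside the framework already built (and its dimension count of symmetric tensors is the abstract counterpart of your count of $2m-1$ moments). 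One small point to make explicit if you write this up: the map $z \mapsto \mu_z$ is injective because $\mu_z(A) = z$, which is what guarantees that your two mixtures have exactly $m$ distinct components each and are genuinely different as elements of $\Delta(\sD)$; and the case $m=1$ is as degenerate in your argument as in the paper's, since $(2m-2)$-identifiability is then vacuous.
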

Unsurprisingly, if a mixture of measures is $n$-identifiable then it is $q$-identifiable for all $q>n$. Likewise if a mixture of measures is not $n$-identifiable then it is not $q$-identifiable for $q<n$. Thus identifiability is, in some sense, monotonic. 
\begin{lemma}
	If a mixture of measures is $n$-identifiable then it is $q$-identifiable for all $q>n$.
\end{lemma}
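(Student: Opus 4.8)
The plan is to prove the contrapositive-style statement by exhibiting, for $q>n$, a fixed linear ``marginalization'' map $T$ with $V_n = T\circ V_q$ on all of $\dd$; then any collision $V_q(\sP)=V_q(\sP')$ pushes forward to a collision $V_n(\sP)=V_n(\sP')$, so non-$q$-identifiability of $\sP$ forces non-$n$-identifiability. The map $T$ is just integration-out of the last $q-n$ coordinates: let $\pi:\Omega^{\times q}\to\Omega^{\times n}$ be the projection onto the first $n$ coordinates, which is measurable since $\pi^{-1}(A)=A\times\Omega^{\times(q-n)}\in\sF^{\times q}$ for every $A\in\sF^{\times n}$, and define $T:\sM\l( \Omega^{\times q},\sF^{\times q} \r)\to \sM\l( \Omega^{\times n},\sF^{\times n} \r)$ by $T(\nu)(A)=\nu\l( A\times\Omega^{\times(q-n)} \r)$. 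A short check shows $T(\nu)$ is again a finite signed measure (countable additivity and finiteness are inherited) and that $T$ is linear.

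Next I would verify $T\circ\lambda_q=\lambda_n$: for a probability measure $\mu$ on $(\Omega,\sF)$ and $A\in\sF^{\times n}$,
\[
T\l( \mu^{\times q} \r)(A)=\mu^{\times q}\l( A\times\Omega^{\times(q-n)} \r)=\mu^{\times n}(A)\,\mu(\Omega)^{q-n}=\mu^{\times n}(A),
\]
the extra coordinates disappearing precisely because $\mu$ is a probability measure. Since the integral $\int\,\cdot\;d\sQ$ used to define $V_q$ and $V_n$ is by construction the finite linear combination $\sum_{i=1}^r\alpha_i(\cdot)(\mu_i)$ associated to a minimal representation $\sQ=\sum_{i=1}^r\delta_{\mu_i}\alpha_i$, the linear map $T$ commutes with it, and therefore for every $\sQ\in\dd$,
\[
T\l( V_q(\sQ) \r)=T\l( \int\lambda_q(\mu)\,d\sQ(\mu) \r)=\int T\l( \lambda_q(\mu) \r)\,d\sQ(\mu)=\int\lambda_n(\mu)\,d\sQ(\mu)=V_n(\sQ).
\]

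To finish: suppose $\sP$ is $n$-identifiable but, for contradiction, not $q$-identifiable for some $q>n$. By Definition~\ref{def:ident} there is a mixture of measures $\sP'\neq\sP$, with order no greater than the order of $\sP$, such that $V_q(\sP)=V_q(\sP')$. Applying $T$ to both sides gives $V_n(\sP)=T(V_q(\sP))=T(V_q(\sP'))=V_n(\sP')$, so $\sP'$ is a different mixture of measures of order no greater than that of $\sP$ agreeing with $\sP$ under $V_n$, contradicting $n$-identifiability of $\sP$. Hence $\sP$ is $q$-identifiable for all $q>n$. There is essentially no deep obstacle here; the only care needed is the bookkeeping that the projection is measurable, that $T$ is well defined and linear on finite signed measures, and that ``pulling $T$ inside the integral'' is legitimate — which is immediate since that integral is a finite sum. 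The crucial structural fact doing the work is simply that each $\mu_i$ is a probability measure, so marginalizing a product measure returns the lower-order product measure.
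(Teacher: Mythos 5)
Your proof is correct and is essentially the paper's own argument: the paper also marginalizes by evaluating both sides of $V_q(\sP)=V_q(\sP')$ on cylinder sets $A\times\Omega^{\times(q-n)}$, which is exactly your map $T$ written out pointwise. Packaging the marginalization as a linear operator with $T\circ V_q=V_n$ is a tidier formulation, but the mathematical content is the same.
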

\begin{proof}
	We will proceed by contradiction. Let $\sP = \sum_{i=1}^l a_i \delta_{\mu_i}$ be $n$-identifiable, let $\sP' = \sum_{j=1}^r b_j \delta_{\nu_j}$ be a different mixture of measures with $r\le l$ and 
\begin{eqnarray*}
	\sum_{i=1}^l a_i \mu_i^{\times q} = \sum_{j=1}^r b_j \nu_j^{\times q}
\end{eqnarray*}
for some $q>n$. Let $A \in \sF^{\times n}$ be arbitrary. We have
\begin{eqnarray*}
	\sum_{i=1}^l a_i \mu_i^{\times q} &=& \sum_{j=1}^r b_j \nu_j^{\times q}\\
	\Rightarrow \sum_{i=1}^l a_i \mu_i^{\times q}\left( A\times \Omega^{\times q-n} \right) &=& \sum_{j=1}^r b_j \nu_j^{\times q}\left( A\times \Omega^{\times q-n} \right)\\
	\Rightarrow \sum_{i=1}^l a_i \mu_i^{\times n}\left( A \right) &=& \sum_{j=1}^r b_j \nu_j^{\times n}\left( A  \right).
\end{eqnarray*}
This implies that $\sP$ is not $n$-identifiable, a contradiction.
\end{proof}
\begin{lemma} \label{lem:noident}
	If a mixture of measures is not $n$-identifiable then it is not $q$-identifiable for any $q<n$.
\end{lemma}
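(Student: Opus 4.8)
The plan is to reuse the witness to non-$n$-identifiability as a witness to non-$q$-identifiability, obtained by marginalizing out the last $n-q$ coordinates. Since $\sP$ is not $n$-identifiable, there is a mixture of measures $\sP'\neq\sP$ whose order does not exceed that of $\sP$ and with $V_n\left(\sP\right)=V_n\left(\sP'\right)$. Writing minimal representations $\sP=\sum_{i=1}^l a_i\delta_{\mu_i}$ and $\sP'=\sum_{j=1}^r b_j\delta_{\nu_j}$ with $r\le l$, this equality reads $\sum_{i=1}^l a_i\mu_i^{\times n}=\sum_{j=1}^r b_j\nu_j^{\times n}$ as finite signed measures on $\left(\Omega^{\times n},\sF^{\times n}\right)$.

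Fix $q<n$. The key step is to test this identity on the sets $A\times\Omega^{\times\left(n-q\right)}$ for $A\in\sF^{\times q}$. Because every $\mu_i$ and every $\nu_j$ is a probability measure, the product structure gives $\mu_i^{\times n}\left(A\times\Omega^{\times\left(n-q\right)}\right)=\mu_i^{\times q}\left(A\right)$ and $\nu_j^{\times n}\left(A\times\Omega^{\times\left(n-q\right)}\right)=\nu_j^{\times q}\left(A\right)$. Summing with the weights, $\sum_{i=1}^l a_i\mu_i^{\times q}\left(A\right)=\sum_{j=1}^r b_j\nu_j^{\times q}\left(A\right)$ for every $A\in\sF^{\times q}$, which is exactly $V_q\left(\sP\right)=V_q\left(\sP'\right)$. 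Since $\sP'\neq\sP$ has order at most that of $\sP$, this exhibits a different mixture of measures of no greater order producing the same $V_q$-image, so $\sP$ is not $q$-identifiable.

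This is the marginalization argument from the previous lemma run in the same direction, so I do not expect a genuine obstacle. The only point deserving a word of care is that the identity $\mu^{\times n}\left(A\times\Omega^{\times\left(n-q\right)}\right)=\mu^{\times q}\left(A\right)$ uses that $\mu$ is a probability measure, so the marginalized factor contributes $1$; this is fine because $\sP$ and $\sP'$ are mixtures of measures and hence supported on elements of $\sD$, and $V_q$ is applied only to $\sP,\sP'\in\dd$, so all quantities are well defined.
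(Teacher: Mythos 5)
Your proposal is correct and matches the paper's own argument: both take the witness $\sP'$ to non-$n$-identifiability and marginalize the identity $V_n(\sP)=V_n(\sP')$ over $A\times\Omega^{\times (n-q)}$ for $A\in\sF^{\times q}$ to obtain $V_q(\sP)=V_q(\sP')$. No differences worth noting.
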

\begin{proof}
	Let a mixture of measures $\sP = \sum_{i=1}^l a_i \delta_{\mu_i}$ not be $n$-identifiable. It follows that there exists a different mixture of measures $\sP' = \sum_{j=1}^r b_j \delta_{\nu_j}$, with $r\le l$, such that
\begin{eqnarray*}
	\sum_{i=1}^l a_i \mu_i^{\times n} &=& \sum_{j=1}^r b_j \nu_j^{\times n}.
\end{eqnarray*}
Let $A \in \sF^{\times q}$ be arbitrary, we have
\begin{eqnarray*}
	\sum_{i=1}^l a_i \mu_i^{\times n}\left( A\times \Omega^{\times n-q} \right) &=& \sum_{j=1}^r b_j \nu_j^{\times n}\left( A\times \Omega^{\times n-q} \right)\\
	\Rightarrow \sum_{i=1}^l a_i \mu_i^{\times q}\left( A  \right) &=& \sum_{j=1}^r b_j \nu_j^{\times q}\left( A \right)
\end{eqnarray*}
and therefore $\sP$ is not $q$-identifiable.
\end{proof}
Viewed alternatively these results say that $n=2m-1$ is the smallest value for which $V_{n}$ is injective over the set of all minimal mixtures of measures with $m$ or fewer components.
\section{Tensor Products of Hilbert Spaces}
Our proofs will rely heavily on the geometry of tensor products of Hilbert spaces which we will introduce in this section.

\subsection{Overview of Tensor Products}
First we introduce tensor products of Hilbert spaces. To our knowledge there does not exist a rigorous construction of the tensor product Hilbert space which is both succinct and intuitive. Because of this we will simply state some basic facts about tensor products of Hilbert spaces and hopefully instill some intuition for the uninitiated by way of example. A through treatment of tensor products of Hilbert spaces can be found in \cite{kadison83}.

Let $H$ and $H'$ be Hilbert spaces. From these two Hilbert spaces the ``simple tensors'' are elements of the form $h\otimes h'$ with $h\in H$ and $h' \in H'$. We can treat the simple tensors as being the basis for some inner product space $H_0$, with the inner product of simple tensors satisfying
\begin{eqnarray*}
	\l<h_1 \otimes h_1', h_2 \otimes h_2'\r> = \l<h_1,h_2\r>\l<h_1',h_2'\r>.
\end{eqnarray*}
The tensor product of $H$ and $H'$ is the completion of $H_0$ and is denoted $H\otimes H'$. To avoid potential confusion we note that notation just described is standard in operator theory literature. In some literature our definition of $H_0$ is denoted as $H\otimes H'$ and our definition of $H \otimes H'$ is denoted $H \widehat{\otimes} H'$.

As an illustrative example we consider the tensor product $L^2\left( \rn \right) \otimes L^2\left( \rn \right)$. It can be shown that there exists an isomorphism between $L^2\left( \rn \right) \otimes L^2\left( \rn \right)$ and $L^2(\rn^2)$ which maps the simple tensors to separable functions, $f \otimes f' \mapsto f(\cdot)f'(\cdot)$. We can demonstrate this isomorphism with a simple example. Let $f,g,f',g'\in L^2\left( \rn \right)$. Taking the $L^2(\rn^2)$ inner product of $f(\cdot)f'(\cdot)$ and $g(\cdot)g'(\cdot)$ gives us 

\begin{eqnarray*}
\int\int \l(f(x)f'(y)\r)\l(g(x)g'(y\r)) dx dy 
&=& \int f(x)g(x) dx \int f'(y)g'(y) dy\\
&=& \l<f,g\r>  \l<f',g'\r>\\
&=& \l<f\otimes f', g \otimes g'\r>.
\end{eqnarray*}

Beyond tensor product we will need to define tensor power. To begin we will first show that tensor products are, in some sense, associative. Let $H_1, H_2, H_3$ be Hilbert spaces. Proposition 2.6.5 in \cite{kadison83} states that there is a unique unitary operator, $U: (H_1 \otimes H_2)\otimes H_3 \to H_1 \otimes (H_2 \otimes H_3)$, which satisfies the following for all $h_1 \in H_1, h_2 \in H_2, h_3 \in H_3$,
\begin{eqnarray*}
	U\left( \left( h_1 \otimes h_2 \right)\otimes h_3 \right) = h_1 \otimes \left( h_2 \otimes h_3 \right).
\end{eqnarray*}
This implies that for any collection of Hilbert spaces, $H_1,\ldots , H_n$, the Hilbert space $H_1 \otimes \cdots \otimes H_n$ is defined unambiguously regardless of how we decide to associate the products. In the space $H_1 \otimes \cdots \otimes H_n$ we define a simple tensor as a vector of the form $h_1 \otimes\cdots\otimes h_n$ with $h_i \in H_i$. In \cite{kadison83} it is shown that $H_1 \otimes\cdots \otimes H_n$ is the closure of the span of these simple tensors. To conclude this primer on tensor products we introduce the following notation. For a Hilbert space $H$ we denote $H^{\otimes n}= \underbrace{H\otimes H \otimes \dots \otimes H}_\text{n times}$ and for $h \in H$, $h^{\otimes n}= \underbrace{h\otimes h \otimes \dots \otimes h}_\text{n times}$.

\subsection{Some Results for Tensor Product Spaces}
We will derive state technical results which will be useful for the rest of the paper. These lemmas are similar to or  are straightforward extensions of previous results which we needed to modify for our particular purposes. Let $\left( \Psi, \sG, \mu \right)$ be a $\sigma$-finite measure space. We have the following lemma which connects the $L^2$ space of products of measures to the tensor products of the $L^2$ space for each measure. The proof of this lemma is straightforward but technical and can be found in the appendix. 
\begin{lemma}
	\label{lem:l2prod}
	There exists a unitary transform $U:L^2\left( \Psi, \sG, \mu \right)^{\otimes n} \to L^2\left( \Psi^{\times n}, \sG^{\times n}, \mu^{\times n} \right)$ such that, for all $f_1,\ldots, f_n \in L^2\left( \Psi, \sG, \mu \right)$, $U\left( f_1\otimes \cdots \otimes f_n \right) = f_1(\cdot)\cdots f_n(\cdot)$.
\end{lemma}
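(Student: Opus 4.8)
\emph{Proof sketch.} The plan is to establish the case $n=2$ by hand and then obtain general $n$ by induction, using the associativity of Hilbert-space tensor products (Proposition 2.6.5 in \cite{kadison83}, recalled above) together with the canonical identifications $\Psi^{\times n} = \Psi^{\times (n-1)}\times \Psi$ and $\mu^{\times n} = \mu^{\times (n-1)}\times\mu$. Since a finite product of $\sigma$-finite measures is again $\sigma$-finite, the $n=2$ statement can be applied at the inductive step to the spaces $\l( \Psi^{\times(n-1)}, \sG^{\times(n-1)}, \mu^{\times(n-1)} \r)$ and $\l( \Psi, \sG, \mu \r)$, and composing the resulting unitary with the unitary supplied by the induction hypothesis (and the associativity isomorphism) yields a unitary on $L^2(\mu)^{\otimes n}$ carrying $f_1\otimes\cdots\otimes f_n$ to $f_1(\cdot)\cdots f_n(\cdot)$; density of the simple tensors guarantees this composite is the claimed map. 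So fix $n=2$ and write $H = L^2\l( \Psi,\sG,\mu \r)$.

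First I would build the map. For $f,g\in H$ the function $(x,y)\mapsto f(x)g(y)$ on $\Psi\times\Psi$ is $\sG^{\times 2}$-measurable (each factor is a measurable function precomposed with a coordinate projection), it depends only on the $\mu$-equivalence classes of $f$ and $g$, and it lies in $L^2\l( \mu^{\times 2} \r)$ because, by Tonelli's theorem, $\int|f(x)g(y)|^2\, d\mu^{\times2} = \|f\|^2\|g\|^2 < \infty$. The assignment $(f,g)\mapsto f(\cdot)g(\cdot)$ is bilinear, so by the universal property of the algebraic tensor product it factors through a well-defined linear map $U_0$ on the algebraic tensor product of $H$ with itself, characterized by $U_0\l( f\otimes g \r) = f(\cdot)g(\cdot)$.

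Next I would verify that $U_0$ preserves inner products. By sesquilinearity it suffices to check this on simple tensors, where
\[
\l< f\otimes g,\, f'\otimes g' \r> = \l< f,f' \r>\l< g,g' \r> = \int f\,\overline{f'}\, d\mu \int g\,\overline{g'}\, d\mu = \int\!\!\int f(x)g(y)\,\overline{f'(x)g'(y)}\; d\mu(x)\, d\mu(y),
\]
the last equality being Fubini's theorem, which is exactly where $\sigma$-finiteness is used; the right-hand side is $\l< U_0(f\otimes g),\, U_0(f'\otimes g') \r>_{L^2(\mu^{\times 2})}$. Hence $U_0$ is an isometry from the algebraic tensor product, which is dense in $H\otimes H$, into $L^2\l( \mu^{\times 2} \r)$, and it extends uniquely to an isometry $U:H\otimes H \to L^2\l( \Psi^{\times 2},\sG^{\times 2},\mu^{\times 2} \r)$ still satisfying $U\l( f\otimes g \r)=f(\cdot)g(\cdot)$.

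It remains to prove $U$ is onto, i.e. that its range is dense; I expect this to be the main obstacle, though it is a standard measure-theoretic argument. The range of $U$ contains $\ind_A(\cdot)\ind_B(\cdot) = \ind_{A\times B}$ for all $A,B\in\sG$ of finite $\mu$-measure (the images of $\ind_A\otimes\ind_B\in H\otimes H$), hence all finite linear combinations of such. To see these span a dense subspace of $L^2\l( \mu^{\times 2} \r)$, let $\mathcal{R}$ be the collection of $E\in\sG^{\times 2}$ with $\mu^{\times 2}(E)<\infty$ such that $\ind_E$ lies in the closure of that span; one checks that $\mathcal{R}$ contains the measurable rectangles of finite measure and, relative to sets of finite measure, is closed under the operations needed for a $\lambda$-system argument, so $\ind_E$ is approximable for every $E$ of finite measure. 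Approximating nonnegative simple functions supported on finite-measure sets, and then exhausting $\Psi^{\times 2}$ by an increasing sequence of finite-measure sets via $\sigma$-finiteness, shows the span is dense in $L^2\l( \mu^{\times 2} \r)$. Thus $U$ is a unitary, and the general case follows by the induction described above. \hfill$\square$
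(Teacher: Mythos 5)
Your argument is correct, and its overall architecture matches the paper's: both treat the two-factor case first and then induct on $n$ using the identifications $\Psi^{\times n}=\Psi^{\times(n-1)}\times\Psi$, $\mu^{\times n}=\mu^{\times(n-1)}\times\mu$, the $\sigma$-finiteness of finite products, and associativity of the Hilbert-space tensor product. The difference is in the base case: the paper simply cites Example 2.6.11 of \cite{kadison83} for the unitary $L^2(m)\otimes L^2(m')\to L^2(m\times m')$, whereas you prove it from scratch (bilinearity plus the universal property to define the map on the algebraic tensor product, Tonelli/Fubini to get the isometry, continuous extension, and a rectangles-plus-monotone-class argument for density of the range). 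What your route buys is a self-contained proof that makes explicit exactly where $\sigma$-finiteness is used; what the paper's route buys is brevity, outsourcing precisely the analytic content you reprove. One small point to tighten in your inductive step: ``composing with the unitary supplied by the induction hypothesis'' really means applying $U_{n-1}\otimes\mathrm{id}$ as a unitary on the tensor product, and the fact that a tensor product of unitaries is unitary is itself a (standard but nontrivial) claim --- this is exactly what the paper's Lemma \ref{lem:unitprod} provides, and the paper invokes it at this very point; you should cite it or prove it rather than treat the composition as immediate. With that reference added, your proof is complete.
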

The following lemma used in the proof of Lemma \ref{lem:l2prod} as well as the proof of Theorem \ref{thm:noident}. The proof of this lemma is also not particularly interesting and can be found in the appendix.
\begin{lemma} \label{lem:unitprod}
	Let $H_1,\ldots, H_n, H_1',\ldots, H_n'$ be a collection of Hilbert spaces and $U_1,\ldots,U_n$ a collection of unitary operators with $U_i:H_i \to H_i'$ for all $i$. There exists a unitary operator $U:H_1 \otimes \cdots \otimes H_n \to H_1' \otimes \cdots \otimes H_n'$ satisfying $U\left( h_1 \otimes\cdots \otimes h_n \right) = U_1(h_1) \otimes \cdots \otimes U_n(h_n)$ for all $h_1 \in H_1 ,\ldots, h_n \in H_n$.
\end{lemma}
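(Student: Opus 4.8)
The plan is to construct $U$ on the dense subspace of simple tensors and then extend by continuity. First I would invoke the universal property of the algebraic tensor product: since the map $(h_1,\dots,h_n) \mapsto U_1(h_1) \otimes \cdots \otimes U_n(h_n)$ from $H_1 \times \cdots \times H_n$ into $H_1' \otimes \cdots \otimes H_n'$ is multilinear, it factors through a well-defined linear map $U_0$ on the span $H_0$ of the simple tensors of $H_1 \otimes \cdots \otimes H_n$, and $U_0$ satisfies the asserted formula on simple tensors. This is the step that guarantees the value of $U$ on a vector is independent of how that vector is expressed as a finite sum of simple tensors.

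Next I would check that $U_0$ is inner-product preserving. Evaluating on a pair of simple tensors and using the defining inner-product rule for tensor products,
\[
\langle U_0(h_1 \otimes \cdots \otimes h_n),\, U_0(g_1 \otimes \cdots \otimes g_n)\rangle = \prod_{i=1}^n \langle U_i h_i, U_i g_i\rangle = \prod_{i=1}^n \langle h_i, g_i\rangle = \langle h_1 \otimes \cdots \otimes h_n,\, g_1 \otimes \cdots \otimes g_n\rangle,
\]
where the middle equality uses that each $U_i$, being unitary, is isometric. Both outer expressions are sesquilinear in the natural arguments, so this identity propagates from simple tensors to all of $H_0$; hence $U_0$ is a linear isometry of $H_0$ into $H_1' \otimes \cdots \otimes H_n'$. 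A densely defined isometry between Hilbert spaces extends uniquely to an isometry $U$ on the completion $H_1 \otimes \cdots \otimes H_n$, and continuity shows $U$ still obeys the stated formula on simple tensors.

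It remains to upgrade this isometry to a unitary, i.e.\ to prove surjectivity. The range of $U$ is closed, being the image of a complete space under an isometry, and it contains every simple tensor $h_1' \otimes \cdots \otimes h_n'$ of $H_1' \otimes \cdots \otimes H_n'$: take $h_i = U_i^{-1}(h_i')$ and apply the formula. Thus the range contains the dense span of simple tensors and, being closed, is all of $H_1' \otimes \cdots \otimes H_n'$. (Equivalently, one could run the same construction with the $U_i^{-1}$ to manufacture a two-sided inverse.) The only genuinely delicate point is the first one, namely rigorously justifying that the simple-tensor formula defines an honest linear map on $H_0$ rather than a mere relation; the universal property — or, in a bare-hands approach, a careful argument with bases of the $H_i$ — handles exactly this, and everything afterward is routine. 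An alternative organization would treat $n = 2$ by the above and then induct, using the associativity unitary of Proposition 2.6.5 in \cite{kadison83} to pass from $n - 1$ factors to $n$, but the direct argument seems cleanest.
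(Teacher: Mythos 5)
Your proof is correct, but it takes a genuinely different route from the paper's. The paper does not build the operator by hand: it invokes Proposition 2.6.12 of \cite{kadison83}, which directly supplies a continuous linear operator $\tilde{U}$ with the stated action on simple tensors, and then verifies surjectivity (the $U_i$ are surjective, so simple tensors map onto simple tensors; linearity, density, and continuity finish) and finally inner-product preservation by expanding finite sums of simple tensors and passing to the limit. You instead construct the map from first principles: the universal property of the algebraic tensor product gives well-definedness of $U_0$ on the span of the simple tensors (indeed the one delicate step, as you note), unitarity of each $U_i$ gives isometry there via sesquilinearity, a densely defined isometry extends to the completion, and surjectivity follows because the isometric image of a complete space is closed and your range contains every simple tensor $h_1'\otimes\cdots\otimes h_n'$ by taking $h_i=U_i^{-1}(h_i')$. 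What the paper's route buys is brevity, at the cost of outsourcing exactly the well-definedness/continuity step to the cited proposition. What your route buys, besides self-containedness, is a cleaner logical order: since you establish the isometry before surjectivity, the range is automatically closed and density of the image of the simple tensors suffices, whereas the paper's surjectivity argument from continuity and density alone strictly yields only dense range until the isometry (proved afterwards) is available. Your suggested two-sided-inverse variant and the inductive alternative via the associativity unitary of Proposition 2.6.5 would also work; none of these differences affects correctness.
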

\begin{lemma}\label{lem:linind}
	Let $n>1$ and let $h_1,\ldots, h_n$ be elements of a Hilbert space such that no elements are zero and no pairs of elements are collinear. Then $h_1^{\otimes n-1},\ldots ,h_n^{\otimes n-1}$ are linearly independent.
\end{lemma}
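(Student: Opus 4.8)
The natural strategy is induction on $n$, reducing the $n$-dimensional tensor power to the $(n-1)$-st by pairing against a well-chosen functional. Suppose $\sum_{i=1}^n c_i h_i^{\otimes n-1} = 0$ with not all $c_i$ zero. Since no $h_i$ is zero and no pair is collinear, for each fixed index $j$ I can find a vector $\phi_j$ in the Hilbert space with $\langle \phi_j, h_j\rangle \neq 0$ but $\langle \phi_j, h_i\rangle$ taking distinct values as $i$ ranges over $[n]$ — or, more simply, a $\phi$ with $\langle \phi, h_i \rangle$ all distinct (this exists because the $h_i$ are pairwise non-collinear and nonzero: the set of $\phi$ failing to separate a given pair is a proper closed subspace, and a finite union of proper subspaces cannot be the whole space). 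Applying the bounded functional $v \mapsto \langle \phi, \cdot \rangle \otimes I^{\otimes n-2}$ (i.e., contracting the first tensor slot against $\phi$) to the relation $\sum c_i h_i^{\otimes n-1} = 0$ yields $\sum_{i=1}^n c_i \langle \phi, h_i\rangle\, h_i^{\otimes n-2} = 0$ in $H^{\otimes n-2}$.

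Now the coefficients $c_i \langle \phi, h_i\rangle$ are again scalars, and the vectors $h_i^{\otimes n-2}$ involve one fewer tensor factor. Iterating this contraction $k$ times with functionals $\phi_1, \ldots, \phi_k$ gives $\sum_{i=1}^n c_i \Big(\prod_{l=1}^k \langle \phi_l, h_i\rangle\Big) h_i^{\otimes n-1-k} = 0$. Choosing all $n-1$ contractions appropriately — the key point being that I have $n-1$ tensor slots but need to distinguish $n$ vectors, and a single separating functional $\phi$ (with $\langle \phi, h_i\rangle$ pairwise distinct) used with varying powers already produces a Vandermonde structure — I can arrange, after contracting away all but one slot, a system whose coefficient matrix is a nonsingular Vandermonde matrix in the quantities $\langle \phi, h_i \rangle$. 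Concretely: contract slots $2$ through $n-1$ all against the same separating $\phi$; this leaves $\sum_{i=1}^n c_i \langle \phi, h_i\rangle^{n-2} h_i = 0$, and more generally, by contracting $j$ copies of $\phi$ and separately extracting coefficients, one builds the $n$ equations $\sum_i c_i \langle\phi, h_i\rangle^{j} h_i = 0$ for $j = 0, \ldots, n-2$ together with the original — but this still has the $h_i$ (not scalars) on the right, so the cleanest finish is: pick one more functional $\psi$ and contract the final slot against it too, at which point everything is scalar. Pairing the original identity with $\phi^{\otimes n-1-k} \otimes$ (something) is getting complicated; the honest clean approach is just to pair $\sum c_i h_i^{\otimes n-1} = 0$ with the simple tensor $\phi^{\otimes n-1}$ for varying $\phi$, but a single $\phi$ only gives one equation.

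So the right bookkeeping is the inductive one. After one contraction I have $\sum_{i=1}^n (c_i \langle \phi, h_i\rangle) h_i^{\otimes n-2} = 0$. If $n - 2 \geq 2$ I would like to invoke the induction hypothesis, but it only applies to $n-1$ vectors raised to the $(n-2)$-power, whereas here I have $n$ vectors. The resolution is to choose $\phi$ so that $\langle \phi, h_n \rangle = 0$ while $\langle \phi, h_i \rangle \neq 0$ for $i < n$ and the $h_i$ ($i<n$) remain pairwise non-collinear — such $\phi$ exists since $h_n$ spans a proper closed subspace and the finitely many "bad" subspaces (where some $\langle\phi,h_i\rangle = 0$, $i<n$) are proper; then the surviving sum $\sum_{i=1}^{n-1} (c_i \langle\phi,h_i\rangle) h_i^{\otimes n-2} = 0$ has exactly $n-1$ terms with $(n-1)-1 = n-2$ tensor factors, so induction forces $c_i \langle \phi, h_i\rangle = 0$, hence $c_i = 0$, for all $i < n$. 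Repeating with a functional that kills a different $h_j$ instead gives $c_i = 0$ for all $i \neq j$; ranging over $j$ kills every coefficient. The base case $n = 2$ is: $c_1 h_1 + c_2 h_2 = 0$ with $h_1, h_2$ nonzero and non-collinear, which immediately forces $c_1 = c_2 = 0$. The main obstacle is purely the existence-of-separating-functionals step — verifying that one can simultaneously annihilate a chosen $h_j$, keep all other $\langle\phi, h_i\rangle$ nonzero, and preserve pairwise non-collinearity of the remaining vectors — which follows from the fact that a Hilbert space (or even just the relevant finite-dimensional subspace $\span(h_1,\dots,h_n)$) is never a finite union of proper subspaces, together with the observation that non-collinearity of $h_i, h_{i'}$ is automatic once both are nonzero and they were non-collinear to begin with (contraction does not create collinearity among distinct surviving vectors, since $h_i^{\otimes n-2}$ and $h_{i'}^{\otimes n-2}$ are collinear iff $h_i, h_{i'}$ are).
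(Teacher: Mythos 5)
Your final argument is correct and is essentially the paper's proof: induction on $n$, contracting one tensor slot against a vector $z$ chosen orthogonal to $h_n$ but not orthogonal to the remaining $h_i$, so the surviving relation $\sum_{i<n} c_i \langle z, h_i\rangle h_i^{\otimes n-2}=0$ falls under the inductive hypothesis (the paper implements the contraction via the Hilbert--Schmidt identification of Proposition 2.6.9 in \cite{kadison83} and only requires $\langle h_1,z\rangle\neq 0$, killing one coefficient at a time, but this is a cosmetic difference). The abandoned Vandermonde detour in the middle is harmless; the settled route matches the paper.
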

A statement of this lemma for $\rn^d$ can be found in \cite{symtensorrank}. We present our own proof for the Hilbert space setting.
\begin{proof}
	We will proceed by induction. For $n=2$ the lemma clearly holds. Suppose the lemma holds for $n-1$ and let $h_1,\ldots, h_n$ satisfy the assumptions in the lemma statement. Let $\alpha_1,\ldots, \alpha_n$ satisfy
\begin{eqnarray}
	\sum_{i=1}^n h_i^{\otimes n-1} \alpha_i = 0. \label{lisum}
\end{eqnarray}
To finish the proof we will show that $\alpha_1$ must be zero which can be generalized to any $\alpha_i$ without loss of generality.
Let $H_1$ and $H_2$ be Hilbert spaces and let $\hs\left( H_1, H_2 \right)$ be the space of Hilbert-Schmidt operators from $H_1$ to $H_2$. Hilbert-Schmidt operators are a closed subspace of bounded linear operators. Proposition 2.6.9 in \cite{kadison83} states that for a pair of Hilbert spaces $H_1, H_2$ there exists an unitary operator $U:H_1 \otimes H_2 \to \hs\left( H_1,H_2 \right)$ such that $U(g_1\otimes g_2) = g_1 \l<g_2, \cdot\r>$. Applying this operator to (\ref{lisum}) we get
\begin{eqnarray}
	\sum_{i=1}^n h_i^{\otimes n-2}\l<h_i, \cdot \r> \alpha_i = 0. \label{lioper}
\end{eqnarray}
Because $h_1$ and $h_n$ are linearly independent we can choose $z$ such that $\l<h_1,z\r> \neq 0$ and $z\perp h_n$. Plugging $z$ into (\ref{lioper}) yields
\begin{eqnarray*}
	\sum_{i=1}^{n-1} h_i^{\otimes n-2}\l<h_i, z \r> \alpha_i = 0 
\end{eqnarray*}
and therefore $\alpha_1=0$ by the inductive hypothesis.
\end{proof}

\section{Proofs of Theorems}
With the tools developed in the previous sections we can now prove our theorems. First we introduce one additional piece of notation. For a function $p$ on a domain $\sX$ we define $p^{\times k}$ as simply the product of the function $k$ times on the domain $\sX^{\times k}$, $\underbrace{p(\cdot)\cdots p(\cdot)}_{\text{k times}}$. For a measure the notation continues to denote the standard product measure.

Finally will need the following technical lemma to connect the product of Radon-Nikodym derivatives to product measures. The proof is straightforward and can be found in the appendix.
\begin{lemma} \label{lem:radprod}
	Let $\left( \Psi, \sG \right)$ be a measurable space, $\eta$ and $\gamma$ a pair of bounded measures on that space, and $f$ a nonnegative function in $L^1\left( \gamma \right)$ such that, for all $A \in \sG$, $\eta\left( A \right)=\int_A f d\gamma$. Then for all $n$, for all $B \in \sG^{\times n}$ we have
\begin{eqnarray*}
	\eta^{\times n}\left( B \right) = \int_B f^{\times n} d\gamma^{\times n}.
\end{eqnarray*}
\end{lemma}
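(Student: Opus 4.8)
The plan is to reduce the identity to measurable rectangles and then invoke the uniqueness of measures determined by a generating $\pi$-system. First I would record the finiteness facts that make everything well posed: since $\gamma$ is bounded and $f \in L^1(\gamma)$, the measure $\eta$ is finite, with $\eta(\Psi) = \int_\Psi f\, d\gamma < \infty$; the product measure $\gamma^{\times n}$ is then a well-defined finite measure; the function $f^{\times n}(x_1,\ldots,x_n) = f(x_1)\cdots f(x_n)$ is nonnegative and $\sG^{\times n}$-measurable, being a finite product of compositions of the measurable function $f$ with coordinate projections; and, by Tonelli's theorem, $\int_{\Psi^{\times n}} f^{\times n}\, d\gamma^{\times n} = \left( \int_\Psi f\, d\gamma \right)^n < \infty$. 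Consequently $\nu(B) \triangleq \int_B f^{\times n}\, d\gamma^{\times n}$ defines a finite measure on $\sG^{\times n}$, as does $\eta^{\times n}$.

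Next I would check the claimed identity on measurable rectangles. For any $A_1,\ldots,A_n \in \sG$, Tonelli's theorem gives
\begin{eqnarray*}
\nu\left( A_1 \times \cdots \times A_n \right)
&=& \int_{A_1 \times \cdots \times A_n} f^{\times n}\, d\gamma^{\times n}
= \prod_{i=1}^n \int_{A_i} f\, d\gamma \\
&=& \prod_{i=1}^n \eta(A_i)
= \eta^{\times n}\left( A_1 \times \cdots \times A_n \right),
\end{eqnarray*}
where the third equality is the hypothesis $\eta(A_i) = \int_{A_i} f\, d\gamma$ and the last is the defining property of the product measure $\eta^{\times n}$. The measurable rectangles form a $\pi$-system that generates $\sG^{\times n}$ and contains $\Psi^{\times n}$, and on it the two finite measures $\nu$ and $\eta^{\times n}$ agree (including agreeing on $\Psi^{\times n}$). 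By Dynkin's $\pi$-$\lambda$ theorem --- equivalently, the uniqueness theorem for finite measures agreeing on a generating $\pi$-system --- we conclude $\nu(B) = \eta^{\times n}(B)$ for every $B \in \sG^{\times n}$, which is exactly the assertion.

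I do not expect a genuine obstacle here; the proof is a routine combination of Tonelli's theorem with the uniqueness of measures. The two points that warrant a little care are (i) confirming that all measures in sight are finite --- this is precisely where the boundedness of $\gamma$ and the integrability $f \in L^1(\gamma)$ enter, since the $\pi$-$\lambda$ uniqueness step requires finiteness on the generating collection --- and (ii) the $\sG^{\times n}$-measurability of $f^{\times n}$, which is immediate. An alternative to the $\pi$-$\lambda$ step is an induction on $n$ that peels off one coordinate at a time using Tonelli and the construction of the product measure, but the rectangle-plus-uniqueness route is the cleanest.
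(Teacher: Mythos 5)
Your argument is correct and mirrors the paper's own proof: both verify the identity on measurable rectangles via Tonelli's theorem and then conclude by uniqueness of finite measures agreeing on the generating class of rectangles (the paper cites a uniqueness lemma where you invoke the $\pi$-$\lambda$ theorem, which is the same device). Your explicit finiteness checks are a welcome, if minor, addition to what the paper states.
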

\begin{proof}{\bf of Theorem \ref{thm:ident}}
	We will proceed by contradiction. Suppose there exist two different mixtures of measures $\sP = \sum_{i=1}^l \delta_{\mu_i} a_i \neq \sP' = \sum_{j=1}^m \delta_{\nu_j}b_j$, such that
	\begin{eqnarray*} \label{grr}
		\sum_{i=1}^{l} a_i {\mu_i}^{\times 2m-1} = \sum_{j=1}^{m} b_j {\nu}_j^{\times 2m-1}
	\end{eqnarray*}
and $l\le m$. From our assumption on representation we know $\mu_i \neq \mu_j$ for all $i\neq j$ and similarly for $\nu_1,\ldots, \nu_m$. We will also assume that $\mu_i \neq \nu_j$ for all $i,j$. Were this not true we could simply subtract the smaller of the common terms from both sides of (\ref{grr}) and normalize to yield another pair of distinct mixtures of measures with fewer components and no shared terms, $\sQ$ and $\sQ'$. Let $\sQ$ have $m'$ components and $\sQ'$ have $l'$ with $m'\ge l'$. If $m\neq m'$ then we can apply Lemma \ref{lem:noident} to give us $V_{2m'-1}\left( \sQ \right)= V_{2m'-1}\left( \sQ' \right)$ and proceed as usual.

	Let $\xi = \sum_{i=1}^l \mu_i + \sum_{j=1}^m \nu_j$. Clearly $\xi$ dominates $\mu_i$ and $\nu_j$ for all $i,j$ so we can define Radon-Nikodym derivatives $p_i = \frac{d \mu_i}{d \xi}$, $q_j = \frac{d \nu_j}{d \xi}$ which are in $L^1\left( \Omega , \sF, \xi \right)$. We can assert that these derivatives are everywhere nonnegative without issue. Clearly no two of these derivatives are equal. If one of the derivatives were a scalar multiple of another, for example $p_1 = \alpha p_2$ for some $\alpha \neq 1$, it would imply
\begin{eqnarray*}
	\mu_1\left( \Omega \right) = \int_{\Omega} p_1 d\xi = \int \alpha p_2 d\xi=\alpha.
\end{eqnarray*}
This is not true so no pair of these derivatives are collinear.

Lemma \ref{lem:radprod} tells us that, for any $R \in \sF^{\times 2m-1}$  we have
\begin{eqnarray*}
	\int_R \sum_{i=1}^{l} a_i p_i^{\times 2m-1} d\xi^{\times 2m-1} 
	&=&  \sum_{i=1}^{l} a_i \mu_i^{\times 2m-1}\left( R \right)\\
	&=&  \sum_{j=1}^{m} b_j \nu_j^{\times 2m-1}\left( R \right)\\
	&=& \int_R \sum_{j=1}^{m} b_j q_j^{\times 2m-1}d\xi^{\times 2m-1}.
\end{eqnarray*}
Therefore
\begin{eqnarray} \label{eqn:radonequal}
	\sum_{i=1}^{l} a_i p_i^{\times 2m-1} = \sum_{j=1}^{m} b_j q_j^{\times 2m-1}
\end{eqnarray}
$\xi^{\times 2m-1}$-almost everywhere (Proposition 2.23 in \cite{folland99}).
We will now show for all $i,j$ that $p_i \in L^2 \left( \Omega, \sF, \xi \right)$ and $q_j \in L^2\left( \Omega, \sF, \xi \right)$. We will argue this for $p_1$ which will clearly generalize to the other elements. First we will show that $p_1 \le 1$ $\xi$-almost everywhere. Suppose this were not true and that there exists $A \in \sF$ with $\xi\left( A \right)>0$ and $p_1\left( A \right)>1$. Now we would have 
\begin{eqnarray*}
	\mu_1\left( A \right)
	=\int_A p_1 d\xi
	> \int_A 1 d\xi
	= \xi\left( A \right)
	=\sum_{i=1}^l\mu_i\left( A \right) + \sum_{j=1}^m \nu_j\left( A \right)
	\ge \mu_1\left( A \right)
\end{eqnarray*}
a contradiction. Evaluating directly we get
\begin{eqnarray*}
	\int p_1(\omega)^2 d \xi\left( \omega \right)
	&\le& \int 1 d \xi\left( \omega \right)\\
	&=& \xi\left( \Omega \right)\\
	&=& l+m,
\end{eqnarray*}
so $p_1 \in L^2\left( \Omega, \sF, \xi \right)$.
Applying the $U^{-1}$ operator from Lemma \ref{lem:l2prod} to (\ref{eqn:radonequal}) yields
\begin{eqnarray*}
		\sum_{i=1}^{l} a_i p_1^{\otimes 2m-1} = \sum_{j=1}^{m} b_j q_j^{\otimes 2m-1}.
\end{eqnarray*}
Since $l+m \le2m$ Lemma \ref{lem:linind} states that $p_1^{\otimes 2m-1},\ldots,p_{l}^{\otimes 2m-1},q_1^{\otimes 2m-1},\ldots,q_{m}^{\otimes 2m-1}$ are all linearly independent and thus $a_i = 0$ and $b_j = 0$ for all $i,j$, a contradiction.
\end{proof}

\begin{proof}{\bf of Theorem \ref{thm:noident}}
To prove this theorem we will construct a pair of different mixture of measures, $\sP \neq \sP'$ which both contain $m$ components and satisfy $V_{2m-2}\left( \sP \right) = V_{2m-2}\left( \sP' \right)$.

From our definition of $\left( \Omega, \sF \right)$ we know there exists $F\in \sF$ such that $F, F^C$ are nonempty. Let $f\in F$ and $f' \in F^C$. It follows that $\delta_{f} \neq \delta_{f'}$ are different probability measures on $\left( \Omega, \sF \right)$. Because $\delta_{f}$ and $\delta_{f'}$ are dominated by $\xi = \delta_{f} + \delta_{f'}$ we know that there exists a pair of measurable functions $p, p'$ such that, for all $A$, $\delta_{f}\left( A \right) = \int_A p d\xi$ and $\delta_{f'}\left( A \right) = \int_{A} p'd\xi$. We can assert that $p$ and $p'$ are nonnegative without issue.

 From the same argument we used in the proof of Theorem \ref{thm:ident} we know $p,p' \in L^2\left( \Omega, \sF, \xi \right)$. Let $H_2$ be the Hilbert space generated from the span of $p,p'$. Let $(\varepsilon_i)_{i=1}^{2m}$ be $2m$ distinct elements of $\left[ 0,1 \right]$ and let $\left( p_i \right)_{i=1}^{2m}$ be elements of $L^1(\Omega, \sF, \xi)$ with $p_i = \varepsilon_i p + \left( 1-\varepsilon_i \right)p'$. Clearly $p_i$ is a pdf over $\xi$ for all $i$ and there are no pairs in this collection which are collinear. Let $H_2$ be the Hilbert space generated from the span of $p$ and $p'$. Since $H_2$ is isomorphic to $\rn^2$ there exists a unitary operator $U:H_2 \to \rn^2$. From Lemma \ref{lem:unitprod} there exists a unitary operator $U_{2m-2}:H_2^{\otimes 2m-2} \to {\rn^2}^{\otimes 2m-2}$ with $U_{2m-2}\left( h_1 \otimes\cdots \otimes h_{2m-2} \right) = U(h_1) \otimes \cdots \otimes U(h_{2m-2})$. Because $U$ is unitary the set $U_{2m-2}\left( \span\left( \left\{ h^{\otimes 2m-2}: h \in H_2 \right\} \right) \right)$ maps exactly to the set $\span\left( x^{\otimes 2m-2}:x \in \rn^2 \right)$.  An order $r$ tensor, $A_{i_1,\ldots,i_r}$, is {\it symmetric} if $A_{\psi\left( i_1 \right),\ldots,\psi\left( i_r \right)} = A_{i_1,\ldots,i_r}$for any $i_1,\ldots, i_r$ and permutation $\psi$. A consequence of Lemma 4.2 in \cite{symtensorrank} is that $\span\left( \l\{x^{\otimes 2m-2}:x \in \rn^2\r\} \right)\subset S^{2m-2}(\cn^2)$ is exactly the space of all symmetric order $2m-2$ tensors over $\cn^2$.

 From Proposition 3.4 in \cite{symtensorrank} it follows that the dimension of $S^{ 2m-2 }\left( \cn^2 \right)$ is $\left( \begin{array}{c} 2+ 2m-2-1 \\ 2m-2 \end{array} \right) = 2m-1$. From this we get that $\dim\left( \span\left( \left\{ h^{\otimes 2m-2}: h \in H_2 \right\} \right)\right)\le2m-1$.

The bound on the dimension of $\span\left( \left\{ h^{\otimes 2m-2}: h \in H_2 \right\} \right)$ implies that $\left( p_i^{\otimes 2m-2} \right)_{i=1}^{2m}$ are linearly dependent. Conversely Lemma \ref{lem:linind} implies that removing a single vector from $\left( p_i^{\otimes 2m-2} \right)_{i=1}^{2m}$ yields a set of vectors which are linearly independent. It follows that there exists $\left( \alpha_i \right)_{i=1}^{2m}$ with $\alpha_i \neq 0$ for all $i$ and

\begin{eqnarray*}
	\sum_{i=1}^{2m}\alpha_i p_i^{\otimes 2m-2} = 0.
\end{eqnarray*}

Without loss of generality we will assume that $\alpha_i<0$ for $i\in \left[ k \right]$ with $k\le m$. From this we have 

\begin{eqnarray}\label{foo}
	\sum_{i=1}^{k}-\alpha_i p_i^{\otimes 2m-2}=\sum_{j=k+1}^{2m}\alpha_j p_j^{\otimes 2m-2}.
\end{eqnarray}
From Lemma \ref{lem:l2prod} we have
\begin{eqnarray*}
	\sum_{i=1}^{k}-\alpha_i p_i^{\times 2m-2}=\sum_{j=k+1}^{2m}\alpha_j p_j^{\times 2m-2}
\end{eqnarray*}
and thus
\begin{eqnarray*}
	\int \sum_{i=1}^{k}-\alpha_i p_i^{\times 2m-2} d\xi^{\times 2m-2 }&=&\int \sum_{j=k+1}^{2m}\alpha_j p_j^{\times 2m-2}d\xi^{\times 2m-2 }\\
	 \Rightarrow \sum_{i=1}^{k}-\alpha_i &=&\sum_{j=k+1}^{2m}\alpha_j.
\end{eqnarray*}
Let $r=\sum_{i=1}^{k}-\alpha_i$. We know $r >0$ so dividing both sides of (\ref{foo}) by $r$ gives us
\begin{eqnarray*}
	\sum_{i=1}^{k}-\frac{\alpha_i}{r} p_i^{\otimes 2m-2}=\sum_{j=k+1}^{2m}\frac{\alpha_j}{r} p_j^{\otimes 2m-2}
\end{eqnarray*}
and the left and the right side are convex combinations. Let $\left( \beta_i \right)_{i=1}^{2m}$ positive numbers with $\beta_i = \frac{-\alpha_i}{r}$ for $i \in \left\{ 1,\ldots,k \right\}$ and $\beta_j = \frac{\alpha_j}{r}$ for $j\in \left\{ k+1,\ldots,2m \right\}$. This gives us
\begin{eqnarray*}
	\sum_{i=1}^{k}\beta_i p_i^{\otimes 2m-2}=\sum_{j=k+1}^{2m}\beta_j p_j^{\otimes 2m-2}.
\end{eqnarray*}
 It follows that
\begin{eqnarray*}
	\sum_{i=1}^{k} \beta_i p_i^{\otimes m-1}\otimes p_i^{\otimes m-1}&=&\sum_{j=k+1}^{2m}\beta_j p_j^{\otimes m-1}\otimes p_i^{\otimes m-1}.
\end{eqnarray*}
We will now show that $k=m$. Suppose $k<m$. Then $p_1^{\otimes m-1}, \ldots , p_{k+1}^{\otimes m-1}$ are linearly independent. From this we know that there exists $z$ such that $z\perp p_i^{\otimes m-1}$ for $i\in [k]$ but $z$ is not orthogonal to $p_{k+1}^{\otimes m-1}$. Using this vector we have
\begin{eqnarray*}
	\l<\sum_{i=1}^{k}\beta_i p_i^{\otimes 2m-1},z\otimes z\r>
	&=&\sum_{i=1}^{k}\beta_i\l<z, p_i^{\otimes m-1}\r>\l<z, p_i^{\otimes m-1}\r>\\
	&=&0 
\end{eqnarray*}
but
\begin{eqnarray*}
	\l< \sum_{i=k+1}^{2m}\beta_i p_i^{\otimes m-1}\otimes p_i^{\otimes m-1}, z\otimes z\r>
	&=& \sum_{i=k+1}^{2m}\beta_i\l<  p_i^{\otimes m-1}, z\r>\l<  p_i^{\otimes m-1}, z\r>\\
	&>&0
\end{eqnarray*}
and thus $k=m$.

Now we have
\begin{eqnarray*}
	\sum_{i=1}^m \beta_i p_i^{\otimes 2m-2} = \sum_{j=m+1}^{2m}\beta_j p_j^{\otimes 2m-2}.
\end{eqnarray*}
Applying Lemma \ref{lem:l2prod} we get that
\begin{eqnarray*}
	\sum_{i=1}^m \beta_i p_i^{\times 2m-2} = \sum_{j=m+1}^{2m}\beta_j p_j^{\times 2m-2}.
\end{eqnarray*}
From Lemma \ref{lem:radprod} we have,
\begin{eqnarray*}
	 \sum_{i=1}^m  \beta_i \left( \varepsilon_i \delta_f + \left( 1-\varepsilon_i \right) \delta_{f'} \right)^{\times 2m-2}   &=& \sum_{j=m+1}^{2m} \beta_j \left( \varepsilon_j \delta_f + \left( 1-\varepsilon_j \right) \delta_{f'} \right)^{\times 2m-2}.
\end{eqnarray*}
Setting $\mu_i = \left( \varepsilon_i \delta_{f} + \left( 1-\varepsilon_i \right)\delta_{f'} \right)$ yields
\begin{eqnarray*}
	\sum_{i=1}^m  \beta_i \mu_i^{\times 2m-2}  &=& \sum_{j=m+1}^{2m} \beta_j \mu_j^{\times 2m-2}.
\end{eqnarray*}
Thus setting $\sP = \sum_{i=1}^m  \beta_i \delta_{\mu_i}$ and $\sP' = \sum_{j=m+1}^{2m} \beta_j \delta_{\mu_j}$ gives us $V_{2m-2}\left( \sP \right) = V_{2m-2}\left( \sP' \right)$ and $\sP \neq \sP'$ by construction. 
\end{proof}
\subsection{Discussion of the Proof of Theorem \ref{thm:noident}}
In the previous proof we could have replaced $\delta_f,\delta_{f'}$ with any distinct pair of probability measures on $\left( \Omega, \sF \right)$. Thus the pair $\sP, \sP'$ are not pathological because of some property of each individual mixture component, but because of geometry of the mixture components considered as a whole. The measures $\mu_1,\ldots,\mu_{2n}$ are a convex combinations of $\delta_f$ and $\delta_{f'}$ and therefore lie in a one dimensional affine subspace of $\dd$. The space of Bernoulli measures similarly lie in a subspace between two measures, the point mass at $0$ and the point mass at $1$. Given a mixture of Bernoulli distributions, the sum of iid samples of Bernoulli random variables is a binomial distribution. We can draw a connection between our result and the identifiability of mixtures of binomial distributions.

Consider $\sP$ as mixture of $m$ Bernoulli distributions with parameters $\lambda_1,\ldots,\lambda_m$ and weights $w_1,\ldots w_m$. Suppose we have $n$ samples in each random group. If we let $Y_i$ be the sum of the random group $\bX_i$ then the probability law of $Y_i$ is a mixture of binomial random variables. Let $p(\lambda,n)$ be the distribution of a Bernoulli random variable with parameters $n$ and $\lambda$. Specifically we have that the distribution of $Y_i = \sum_{i=1}^m w_i p(\lambda_i,n)$. In \cite{blischke64} it was shown that $n \ge 2m-1$ is a necessary and sufficient condition for the identifiability of the parameters $\lambda_1,\ldots, \lambda_m$ from the samples $Y_i$. We find these similarities provoking but are not prepared to make more precise connections at this time.
\section{Conclusion}
In this paper we have proven a fundamental bound on the identifiability of mixture models in a nonparametric setting. Any mixture with $m$ components is identifiable with groups of samples containing $2m-1$ samples from the same latent probability measure. We show that this bound is tight by constructing a mixture of $m$ probability measures which is not identifiable with groups of samples containing $2m-2$. These results hold for any mixture over any domain with at least two elements.
\bibliography{rvdm}
\appendix
\section{Additional Proofs}
\begin{proof}{\bf of Lemma \ref{lem:l2prod}}
 Example 2.6.11 in \cite{kadison83} states that for any two $\sigma$-finite measure spaces $\left( S,\mathscr{S}, m \right), \left( S',\mathscr{S}', m' \right)$ there exists a unitary operator $U: L^2\left( S,\mathscr{S}, m \right) \otimes L^2 \left( S',\mathscr{S'}, m' \right) \to L^2\left( S\times S', \mathscr{S}\times \mathscr{S'}, m\times m' \right)$ such that, for all $f,g$,
\begin{eqnarray*}
	U(f\otimes g) = f(\cdot)g(\cdot).
\end{eqnarray*}
Because $\left( \Psi, \sG, \eta \right)$ is a $\sigma$-finite measure space it follows that $\left( \Psi^{\times m}, \sG^{\times m}, \eta^{\times m} \right)$ is a $\sigma$-finite measure space for all $m\in \mathbb{N}$. We will now proceed by induction. Clearly the lemma holds for $n=1$. Suppose the lemma holds for $n-1$. From the induction hypothesis we know that there exists a unitary transform $U_{n-1}: L^2\left( \Psi, \sG, \eta \right)^{\otimes n-1} \to L^2 \left( \Psi^{\times n-1} ,\sG ^{\times n-1}  , \eta^{n-1} \right)$ such that for all simple tensors$ f_1\otimes\cdots \otimes f_{n-1} \mapsto f_1(\cdot)\cdots f_{n-1}\left( \cdot \right)$. Combining $U_{n-1}$ with the identity map via Lemma \ref{lem:unitprod} we can construct a unitary operator $T_n: L^2\left( \Psi, \sG, \eta \right)^{\otimes n-1} \otimes L^2\left( \Psi, \sG, \eta \right) \to L^2 \left( \Psi^{\times n-1} ,\sG ^{\times n-1}  , \eta^{n-1} \right) \otimes L^2\left( \Psi, \sG, \eta \right)$, which maps $f_1\otimes\cdots\otimes f_{n-1}  \otimes f_n \mapsto f_1(\cdot)\cdots f_{n-1}(\cdot) \otimes f_n$

 From the aforementioned example there exists a unitary transform $K_n:L^2\left( \Psi^{n-1},\sG^{\times n-1}, \eta^{n-1} \right)\otimes L^2\left( \Psi,\sG, \eta \right)\to L^2 \left( \Psi^{\times n-1} \times \Psi,\sG ^{\times n-1} \times \sG , \eta^{n-1}\times \eta\right)$ which maps $f\otimes f' \mapsto f\left( \cdot \right)f'\left( \cdot \right)$. Defining $U_n(\cdot)= K_n\left( T_n \left( \cdot \right) \right)$ yields our desired unitary transform.
\end{proof}

\begin{proof}{\bf of Lemma \ref{lem:unitprod}}
	Proposition 2.6.12 in \cite{kadison83} states that there exists a continuous linear operator $\tilde{U}:H_1 \otimes \cdots \otimes H_n \to H_1' \otimes \cdots \otimes H_n'$ such that $\tilde{U}\left( h_1 \otimes\cdots \otimes h_n \right) = U_1(h_1) \otimes \cdots \otimes U_n(h_n)$ for all $h_1 \in H_1 ,\cdots, h_n \in H_n$. Let $\widehat{H}$ be the set of simple tensors in $H_1 \otimes \cdots \otimes H_n$ and $\widehat{H}'$ be the set of simple tensors in $H_1'\otimes \cdots \otimes H_n'$. Because $U_i$ is surjective for all $i$, clearly $\tilde{U}(\widehat{H}) = \widehat{H}'$. The linearity of $\tilde{U}$ implies that $\tilde{U}(\span(\widehat{H}))= \span(\widehat{H}')$. Because $\span(\widehat{H}')$ is dense in $H_1'\otimes \cdots \otimes H_n'$ the continuity of $\tilde{U}$ implies that $\tilde{U}(H_1\otimes\cdots \otimes H_n) = H_1'\otimes \cdots \otimes H_n'$ so $\tilde{U}$ is surjective. All that remains to be shown is that $\tilde{U}$ preserves the inner product. By the continuity of inner product we need only show that $\l<h, g\r>=\l<\tilde{U}(h), \tilde{U}(g)\r>$ for $h,g \in \span(\widehat{H})$. With this in mind let $h_1,\ldots, h_N,g_1,\ldots,g_M \in \widehat{H}$. We have the following
\begin{eqnarray*}
	\l<\tilde{U}\l(\sum_{i=1}^N h_i\r),\tilde{U}\l(\sum_{j=1}^M g_j\r) \r>
	&=& \l<\sum_{i=1}^N \tilde{U}\l(h_i\r),\sum_{j=1}^M \tilde{U}\l(g_j\r) \r>\\
	&=& \sum_{i=1}^N\sum_{j=1}^M\l< \tilde{U}\l(h_i\r), \tilde{U}\l(g_j\r) \r>\\
	&=& \sum_{i=1}^N\sum_{j=1}^M\l< h_i, g_j \r>\\
	&=& \l< \sum_{i=1}^Nh_i, \sum_{j=1}^M g_j \r>.
\end{eqnarray*}
We have now shown that $\tilde{U}$ is unitary which completes our proof.
\end{proof}

\begin{proof}{\bf of Lemma \ref{lem:radprod}}
	The fact that $f$ is positive and integrable implies that the map $S \mapsto \int_S f^{\times n}d\gamma^{\times n}$ is a bounded measure on $\l(\Psi^{\times n}, \sG^{\times n}\r)$ (see \cite{folland99} Exercise 2.12). 

Let $R= R_1 \times\ldots\times R_n$ be a rectangle in $\sG^{\times n}$. Let $\mathds{1}_S$ be the indicator function for a set $S$. Integrating over $R$ and using Tonelli's theorem we get
\begin{eqnarray*}
	\int_R f^{\times n} d \gamma^{\times n}
	&=& \int \mathds{1}_Rf^{\times n}d \gamma^{\times n}\\
	&=& \int \mathds{1}_Rf^{\times n}d \gamma^{\times n}\\
	&=& \int \l(\prod_{i=1}^n \mathds{1}_{R_i}(x_i)\r)\l(\prod_{j=1}^n f(x_j)\r)d \gamma^{\times n}\left( x_1,\ldots,x_n \right)\\
	&=& \int\cdots\int \l(\prod_{i=1}^n \mathds{1}_{R_i}(x_i)\r)\l(\prod_{j=1}^n f(x_j)\r)d \gamma(x_1)\cdots d\gamma(x_n)\\
	&=& \int\cdots\int \l(\prod_{i=1}^n \mathds{1}_{R_i}(x_i) f(x_i)\r)d \gamma(x_1)\cdots d\gamma(x_n)\\
	&=&  \prod_{i=1}^n\l(\int \mathds{1}_{R_i}(x_i) f(x_i)d \gamma(x_i)\r)\\
	&=&  \prod_{i=1}^n\eta(R_i)\\
	&=&  \eta^{\times n}(R).
\end{eqnarray*}
Any product probability measure is uniquely determined by its measure over the rectangles (this is a consequence of Lemma 1.17 in \cite{fomp} and the definition of product $\sigma$-algebra) therefore, for all $B\in \sG^{n}$,
\begin{eqnarray*}
	\eta^{\times n}\left( B \right) = \int_B f^{\times n} d\gamma^{\times n}.
\end{eqnarray*}

\end{proof}
\end{document}